\documentclass[11pt]{article}

\usepackage[preprint]{acl}

\usepackage{times}
\usepackage{latexsym}

\usepackage[T1]{fontenc}

\usepackage[utf8]{inputenc}

\usepackage{microtype}

\usepackage{inconsolata}

\usepackage{booktabs}
\usepackage{multirow}    
\usepackage[table]{xcolor} 
\usepackage{array}       
\usepackage{graphicx}    
\usepackage{makecell}    
\usepackage{graphicx}
\usepackage{amsfonts}
\usepackage{amsmath}
\usepackage{bbm}
\usepackage{dblfloatfix}
\usepackage{float}
\usepackage{amsthm}
\usepackage{thmtools, thm-restate}
\usepackage{wrapfig}
\usepackage{placeins}
\usepackage{algorithm}
\usepackage{algorithmicx}
\usepackage{algpseudocode}
\usepackage{dblfloatfix}
\newtheorem{theorem}{Theorem}[section]
\newtheorem{lemma}[theorem]{Lemma}

\theoremstyle{definition}

\theoremstyle{remark}

\newcommand{\arrol}{\textsc{ARRoL}}

\title{Prune as You Generate: Online Rollout Pruning for\\ Faster and Better RLVR}

\author{\textbf{Haobo Xu$^1$, Sirui Chen$^1$, Ruizhong Qiu$^1$, Yuchen Yan$^2$,}\\ {\textbf{Chen Luo$^2$, Monica Cheng$^2$, Jingrui He$^1$, Hanghang Tong$^1$}} \\
$^1$ University of Illinois at Urbana-Champaign\ 
   $^2$ Amazon\\
  }

\begin{document}
\maketitle

\begin{abstract}
Reinforcement Learning with Verifiable Rewards (RLVR) has significantly advanced the reasoning capabilities of Large Language Models (LLMs). However, methods such as GRPO and DAPO suffer from substantial computational cost, since they rely on sampling many rollouts for each prompt.
Moreover, in RLVR the relative advantage is often sparse: many samples become nearly all-correct or all-incorrect, yielding low within-group reward variance and thus weak learning signals.
In this paper, we introduce \arrol\ (\textbf{A}ccelerating \textbf{R}LV\textbf{R} via \textbf{o}nline Ro\textbf{L}lout Pruning), an online rollout pruning method that prunes rollouts during generation while explicitly steering the surviving ones more correctness-balanced to enhance learning signals. Specifically, \arrol\ trains a lightweight quality head on-the-fly to predict the success probability of partial rollouts and uses it to make early pruning decisions. The learned quality head can further weigh candidates to improve inference accuracy during test-time scaling. 
To improve efficiency, we present a system design that prunes rollouts inside the inference engine and re-batches the remaining ones for log-probability computation and policy updates. Across GRPO and DAPO on Qwen-3 and LLaMA-3.2 models (1B-8B), \arrol\ improves average accuracy by $+2.30$ to $+2.99$ while achieving up to $1.7\times$ training speedup, and yielding up to $+8.33$ additional gains in average accuracy in test-time scaling. The code is available at \url{https://github.com/Hsu1023/ARRoL}.
\end{abstract}
\section{Introduction}

\begin{figure*}
\begin{center}
\includegraphics[scale=1.25]{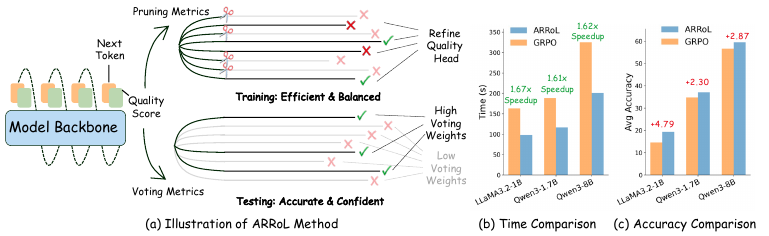}
\caption{\textbf{\arrol\ overview and results.} (a) \arrol\ uses a quality head to score partial rollouts, enabling early pruning for efficient and reward-balanced training, and the scores can also be used as voting weights for test-time scaling. (b) Wall-clock time comparison between \arrol\ and GRPO across different model backbones, showing consistent speedups. (c) Accuracy comparison, where \arrol\ improves average accuracy over GRPO.}
\vspace{-10pt}
\end{center}
\end{figure*}
Reasoning abilities of Large Language Models (LLMs) have recently gained great success in many domains such mathematical problem solving and code generation~\citep{jaech2024openai,guo2025deepseek,qiu2024efficient}. 
Reinforcement Learning with Verifiable Rewards (RLVR)~\citep{lambert2024tulu,dapo} as a critical technique plays an important role in enhancing reasoning ability of LLMs. A representative method is Group Relative Policy Optimization (GRPO)~\citep{grpo}, which utilizes binary rewards such as correctness of a logical problem as learning signals and compute advantages within a group of rollouts per prompt. However, such methods are largely constrained by high computational cost~\citep{xu2025not,lin2025cppo}. During training, each prompt requires generating a large group of rollouts, which is computationally expensive, making training expensive and limiting the practicality of RLVR at scale. 

To mitigate training cost, prior work has explored several directions. Some studies~\citep{lin2025cppo,xu2025not} 
reduce the number of rollouts used for gradient estimation and policy updates. 
However, these methods typically manipulate rollouts at the post-generation level; therefore, they do not reduce rollout generation time, which can limit end-to-end speedups.

Other studies employ speculative decoding to accelerate rollout generation~\citep{he2025history,liu2025spec}, but they rely on historical sequences from previous epochs, which may vary and not suitable for commonly adopted small epochs settings. 
Moreover, they do not explicitly address a key issue in RLVR with binary rewards (0/1): when rewards within a group are highly imbalanced (e.g., mostly correct or mostly incorrect), the within-group reward diversity becomes low, leading to weak learning signals~\citep{bae2025online}. In the extreme case where a group collapses to all 0s or all 1s, the group-normalized advantages can degenerate to zero, resulting in a vanishing policy gradient. This motivates a key question: \emph{Can we reduce rollout cost while strengthening learning signals?}

To answer this question, we propose an online rollout pruning method that \emph{carefully selects a correctness-balanced subset of rollouts for training during rollout generation} using a quality predictor. 
Concretely, we train a lightweight model head 
to score early-stage partial rollouts and map the score to an estimated success probability. The rollouts with final rewards can naturally be used as training data of rollouts, and 
it introduces negligible overhead. 

We compare the quality scores with other heuristic metrics, such as DeepConf~\citep{fu2025deep}, and show that the scores generated by a learnable head are better than trace confidence across datasets, as the latter can be biased by patterns (e.g., reflections or formula-rich text) and may misalign with final correctness. 
Then, we prune the early-stage rollouts based on the quality scores to make the correctness of remaining rollouts more balanced. This yields a “less is more” effect:
fewer rollouts continue to generate and involve in advantage computation, leading to less computation cost, while the remaining rollouts provide stronger learning signals due to improved balance. Furthermore, the learned quality head can naturally serve as a correctness predictor to weight candidates in \textit{test-time scaling} to improve accuracy instead of naive majority vote. 

To implement online pruning during generation and improve efficiency, we integrate pruning into a standard frontend–backend RL training architecture. The backend evaluates rollouts by the quality head at an early stage and immediately removes pruned sequences from the request pool, allowing the scheduler to reallocate freed capacity to other active sequences and reduce overall generation time. The frontend receives pruning masks, filters pruned rollouts, and re-batches the survivors for log-probability computation and optimization, leading to less computational cost as well.

In summary, our key contributions are as follows: 
\begin{itemize}
    \item \textbf{Online Rollout Pruning.} We propose an online, quality-head-guided rollout pruning strategy, \arrol, which explicitly controls within-group reward balance while reducing compute, improving average accuracy of GRPO/DAPO training on Qwen-3 and LLaMA-3.2 models (1B-8B) by $+2.30$ to $+2.99$.
    \item \textbf{Test-time Scaling.} We leverage the trained quality head at inference time as voting weights for test-time scaling, improving final-answer aggregation, leading to $+8.33$ gains in average accuracy.
    \item \textbf{System Speedup.} We present a system design that realizes end-to-end speedups by pruning inside the generation backend and re-batching survivors in the frontend, achieving $1.6-1.7\times$ speedup. 

\end{itemize}

\section{Related Work}
\paragraph{Efficient RLVR.} 
Recent studies improve the efficiency of RLVR from different angles.
Some modify rollout construction. For example, S-GRPO~\citep{dai2025s} derives a serial group of rollouts from a single trajectory, which may reduce trajectory diversity.
Others leverage historical information. GRESO~\citep{zheng2025act} skips likely uninformative prompts before rollouts.
RhymeRL~\citep{he2025history} reuses historical rollout tokens by speculative decoding. However, these speedups rely on historical information and can be limited in cold-start settings.
Another line targets post-rollout optimization.
CPPO~\citep{lin2025cppo} prunes generated completions, and PODS~\citep{xu2025not} downsamples a large rollout pool, to accelerate the update phase. However, these methods do not directly reduce (and can even increase) token-generation cost during rollout.
Several works target to accelerating rollout generation.
Spec-RL~\citep{liu2025spec} and FastGRPO~\citep{zhang2025fastgrpo} employ speculative decoding, while FlashRL~\citep{yaoyour} and QeRL~\citep{huang2025qerl} use low-precision/quantized rollouts to speed up token generation.
In contrast, our method uses a lightweight logits-based probe to predict rollout utility online, enabling training-time pruning with controlled signal quality and a unified criterion that also supports test-time rollout filtering.

\paragraph{Test-time Scaling.} 
Test-time scaling (TTS) improves reasoning performance by allocating additional compute
at inference time without modifying model parameters.
TTS is commonly categorized into sequential and parallel strategies.
Sequential TTS increases compute along a single trajectory by extending reasoning process or revisiting the initial answer.
For instance, s1~\citep{muennighoff2025s1} proposes budget forcing to terminate trajectories early or append double-check tokens to encourage rethinking.
Other work studies the underthinking phenomenon and triggers deeper deliberation if necessary~\citep{wang2025thoughts,qiu2025ask}.
In contrast, parallel TTS samples multiple trajectory candidates and aggregates them, including Self-Consistency~\citep{wang2022self,cui2026adafuse}, Best-of-N~\citep{zhou2022least,qiu2025efficient2}, and adaptive voting~\citep{snell2024scaling}.
Recent studies further explore confidence-based TTS.
DeepConf~\citep{fu2025deep} uses log-probability-based confidence to prune low-confidence trajectories, while CGES~\citep{aghazadeh2025cges} employs heuristic estimates or reward models to early-stop the sampling process.
However, these heuristic confidence signals are typically not guaranteed to align with final-answer correctness, so their reliability may degrade under distribution shift or in out-of-domain settings.

\section{Preliminaries}
\paragraph{Trace Confidence.}
Recent work leverages model-internal uncertainty signals to evaluate the quality of an LLM-generated trace.
Given the predicted token distribution $P_t(\cdot)$ at position $t$, token confidence is defined as
$H_t = -\sum_{j=1}^{V}\log P_t(j)$, where $V$ is the vocabulary size. Self-uncertainty~\citep{kang2025scalable} defines trace confidence as the average token confidence over the trace. 
DeepConf~\citep{fu2025deep} further improves effectiveness and efficiency by computing window-level confidence. 
Specifically, it averages token uncertainty within a fixed-size sliding window $w$:
$H_w = \frac{1}{|w|}\sum_{t\in w} H_t$,
and then aggregates $\{H_w\}$ along the trace, e.g., using the minimum window value or the average of the bottom 10\% windows as the trace-level score.

\paragraph{Reinforcement Learning with Verifiable Rewards (RLVR).} 
Let the large language model be the policy $\pi_\theta$ that, given a prompt $x$, generates a rollout $o$ that contains the reasoning trace and the final answer $y$.
Assume a dataset $\mathcal{D}=\{(x_i,a_i)\}_{i=1}^N$, where $a_i$ is the ground-truth answer to the prompt $x_i$.
We define a verifiable reward $R(y_i,a_i)=\mathbbm{1}[y_i \equiv a_i]$, where $\mathbbm{1}[\cdot]\in\{0,1\}$ is an indicator that evaluates whether $y_i$ and $a_i$ are equivalent, e.g., mathematically equivalent.

\paragraph{Group-Relative Policy Optimization (GRPO).} GRPO~\citep{grpo} estimates advantages using the relative performance within a group of answers for the same prompt, without a value model. The objective is:

\vspace{-30pt}
\begin{equation*}
\begin{aligned}
J(\theta)& = \mathbb{E}_{(x,a)\sim \mathcal{D},\{o_i\}_{i=1}^{G}}\frac{1}{G}\sum^{G}_{i=1}\frac{1}{|o_i|}\sum^{|o_i|}_{t=1}\min[r_{i,t}A_{i},\\
&
\text{clip}(r_{i,t}, 1-\epsilon, 1+\epsilon)A_{i}]-\beta \cdot\text{KL}(\pi_\theta||\pi_{ref}),
\end{aligned}
\end{equation*}
where $G$ is the group size,
$r_{i,t}=\frac{\pi_\theta(o_{i,t}\mid o_{i,<t})}{\pi_{\text{ref}}(o_{i,t}\mid o_{i,<t})}$ is the importance ratio, and
$A_{i}=\frac{R(y_i,a_i)-\mathrm{mean}(\{R(y_j,a_j)\}_{j=1}^{G})}{\mathrm{std}(\{R(y_j,a_j)\}_{j=1}^{G})}$ is the group-relative advantage. However, GRPO incurs substantial time cost because it must generate many rollouts per prompt and process them for log-probability computation and policy updates. Also, it can encounter sparse signal issue that rewards within a group are all 0/1 and group-normalized advantages become zero, leading to vanishing gradient. 
\section{Method}

We introduce \arrol, a rollout pruning method to improve the efficiency of GRPO by balancing 0/1 rewards within each group. \arrol\ trains a lightweight model head, named quality head,  to score partial rollouts, and uses these scores to select a balanced subset for training. The learned scores can also be reused as voting weights at test time. We further present a system design that realizes the wall-clock speedup in practice.

\subsection{Pruning Improves Sample Balance}
\label{sec:balanced}

GRPO suffers from sparse signals when the rewards are nearly all 0s or all 1s. Intuitively, a more balanced 
sample group will introduce larger variance within a group, leading to non-vanishing gradients. Also, if samples are balanced, we can avoid circumstances in which some groups are dominated by a few minority samples, leading to sparse and noisy learning signals. Recent studies~\citep{bae2025online} provide theoretical support that, under binary (0/1) rewards, the learning signal is proportional to the reward variance and is maximized when the pass ratio (i.e., the fraction of positive samples within a group) is close to 0.5, thereby improving the effectiveness of RL training. 
Based on this, letting the positive-sample ratio be $\rho$, we further show that rollout pruning can push the empirical ratio toward $\rho$ (ideally $\rho=0.5$), improving sample balance beyond its efficiency gains.

\begin{lemma}[Existence of a Corrective Pruning]\label{lemma:1}
Consider a mini-batch of size $G$, each with a label $y_i\in\{0,1\}$. We assume a fixed positive ratio $\rho\in [0,1]$ and conditional independence given latent Bernoulli parameters: $Y_i|q^\star_i\sim \text{Bernoulli} (q^\star_i)$.  We define the batch mean
$\mu^\star := \frac{1}{G}\sum_{i=1}^G q_i^\star$ and the pruned mean
$\mu_{-j}^\star := \frac{1}{G-1}\sum_{i\neq j} q_i^\star$.
If $\mu^\star>\rho$ and there exists an index $j$ such that $q_j^\star>\mu^\star$,
then pruning $j$ strictly reduces the deviation to $\rho$:
\[
\big|\mu^\star_{-j}-\rho\big| < \big|\mu^\star-\rho\big|.
\]
Symmetrically, if $\mu^\star<\rho$ and there exists $j$ such that $q_j^\star<\mu^\star$,
then the same conclusion holds.
\end{lemma}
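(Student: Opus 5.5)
The plan is to reduce everything to an exact identity for the leave-one-out mean and then compare distances to $\rho$ directly. First, since $\sum_{i\neq j} q_i^\star = G\mu^\star - q_j^\star$, we have
\[
\mu^\star_{-j} \;=\; \frac{G\mu^\star - q_j^\star}{G-1} \;=\; \mu^\star - \frac{q_j^\star-\mu^\star}{G-1}.
\]
Set $\delta_j := (q_j^\star-\mu^\star)/(G-1)$. In the case $\mu^\star>\rho$ and $q_j^\star>\mu^\star$, this gives $\delta_j>0$, so pruning $j$ moves the mean strictly toward $\rho$: $\mu^\star_{-j}=\mu^\star-\delta_j<\mu^\star$. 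The symmetric case $\mu^\star<\rho$, $q_j^\star<\mu^\star$ reduces to this one under the substitutions $q_i^\star\mapsto 1-q_i^\star$ and $\rho\mapsto 1-\rho$, so I would only write out the first case.

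The second step is to turn ``moves toward $\rho$'' into ``strictly closer to $\rho$''. Write $d:=\mu^\star-\rho>0$, so that $\mu^\star_{-j}-\rho = d-\delta_j$. The claim $|d-\delta_j|<d$ holds exactly when $0<\delta_j<2d$. The lower bound is already established. The upper bound $\delta_j<2d$ says the single removal does not overshoot past $\rho$ by more than the original gap, which is equivalent to
\[
q_j^\star-\mu^\star \;<\; 2(G-1)(\mu^\star-\rho).
\]

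I expect this overshoot bound to be the main obstacle, because it does not follow from the stated hypotheses alone. A counterexample is $G=2$, $q^\star=(1,0)$, $\rho=0.45$. Here $\mu^\star=0.5>\rho$, and the only admissible index $j=1$ gives $\mu^\star_{-1}=0$, so the deviation grows from $0.05$ to $0.45$.

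My plan is therefore to prove the lemma with the explicit side condition $q_j^\star-\mu^\star<2(G-1)(\mu^\star-\rho)$, or with the symmetric condition $\mu^\star-q_j^\star<2(G-1)(\rho-\mu^\star)$ in the other case. I would then give simple sufficient conditions for it. Since $q_j^\star-\mu^\star\le 1-\mu^\star$, it suffices that $G-1 > (1-\mu^\star)/\bigl(2(\mu^\star-\rho)\bigr)$. This holds, for instance, whenever the group is large relative to the inverse gap $1/|\mu^\star-\rho|$, which is the regime of interest for GRPO groups.

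I would also note that, among admissible indices, choosing $j$ with the smallest excess $q_j^\star-\mu^\star>0$ is the choice most likely to satisfy the bound. This matches the practical intuition that pruning should be \emph{corrective} rather than aggressive.
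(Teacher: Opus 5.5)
Your proposal is correct, and it is more careful than the paper's own proof. The paper starts from your identity $\mu^\star_{-j}-\mu^\star=(\mu^\star-q_j^\star)/(G-1)<0$. It then reasons that since $\rho<\mu^\star$, ``moving $\mu^\star$ downward moves it toward $\rho$,'' and concludes that the deviation strictly decreases. That inference is exactly the overshoot step you flag: it yields only $\mu^\star_{-j}<\mu^\star$ and says nothing about landing beyond $\rho$.

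Your counterexample ($G=2$, $q^\star=(1,0)$, $\rho=0.45$) is valid, so the lemma as stated is false. The failure is not a small-$G$ artifact. Take $G=16$, $\rho=0.49$, $q_j^\star=0.9$, and the other fifteen entries equal to $7.1/15$. Then $\mu^\star=0.5$, but $|\mu^\star_{-j}-\rho|\approx 0.0167>0.01$. Moreover, $j$ is the only index above the mean, and removing any other index also increases the deviation, so no single prune helps at all.

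In general, failure forces $\mu^\star-\rho\le(1-\mu^\star)/(2(G-1))$. So the lemma breaks exactly when the group is already within $O(1/G)$ of the target, which is the regime balanced pruning is driving toward.

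Your characterization $0<\delta_j<2d$, equivalently $q_j^\star-\mu^\star<2(G-1)(\mu^\star-\rho)$, is both necessary and sufficient, so your repaired lemma is tight. The reduction of the second case via $q_i^\star\mapsto 1-q_i^\star$, $\rho\mapsto 1-\rho$ is valid: it maps that case's hypotheses onto the first case's and preserves both deviations. Choosing the admissible index with the smallest positive excess succeeds whenever any admissible index does.

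Two small additions. You should state $G\ge 2$ explicitly, so that $\mu^\star_{-j}$ is defined. Also, the proof of Theorem~\ref{theorem:1} uses only the appendix lemmas on error transfer, near-optimality of $\hat{\jmath}$, and Hoeffding concentration, not this lemma, so the defect you found does not propagate to the main theorem.
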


\begin{theorem}[High-probability closeness to target $\rho$]\label{theorem:1}
Under the setting of Lemma~\ref{lemma:1}.
Assume we have posterior-mean estimates $\{q_i\}_{i=1}^G$
satisfying the uniform accuracy condition $|q_i-q_i^\star|\le \epsilon$, $\forall i$.
We define
\(
\mu_{-j}:=\frac{1}{G-1}\sum_{i\ne j} q_i,
\)
let
\(
\hat{\jmath}:=\arg\min_{j\in[G]}\big|\mu_{-j}-\rho\big|\), 
\(\hat{p}_{-\hat{\jmath}}:=\frac{1}{G-1}\sum_{i\ne \hat{\jmath}} Y_i,
\)
and fix any $\delta\in(0,1)$. Then, with probability at least $1-\delta$, we have
\[
\big|\hat{p}_{-\hat{\jmath}}-\rho\big|
\ \le\
\min_{j\in[G]}\big|\mu^\star_{-j}-\rho\big|
\ +\ 2\epsilon
\ +\ \sqrt{\frac{\log(2/\delta)}{2(G-1)}}.
\]
\end{theorem}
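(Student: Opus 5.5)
The plan is to split the deviation $|\hat p_{-\hat\jmath}-\rho|$ with the triangle inequality into a statistical term, an estimation term and an optimization term:
\[
\big|\hat p_{-\hat\jmath}-\rho\big|
\le \big|\hat p_{-\hat\jmath}-\mu^\star_{-\hat\jmath}\big|
+ \big|\mu^\star_{-\hat\jmath}-\mu_{-\hat\jmath}\big|
+ \big|\mu_{-\hat\jmath}-\rho\big| .
\]
Each term is bounded separately, and the three bounds add up to the claimed inequality.

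\textbf{Estimation term.} Averaging the uniform accuracy condition over the $G-1$ retained indices gives $|\mu_{-j}-\mu^\star_{-j}|\le \frac{1}{G-1}\sum_{i\ne j}|q_i-q_i^\star|\le\epsilon$ for every $j$. In particular this holds at $j=\hat\jmath$, so the second term is at most $\epsilon$.

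\textbf{Optimization term.} By definition of $\hat\jmath$ as the minimizer, $|\mu_{-\hat\jmath}-\rho|\le|\mu_{-j}-\rho|$ for every $j$. Applying the same $\epsilon$-bound again gives $|\mu_{-j}-\rho|\le|\mu^\star_{-j}-\rho|+\epsilon$ for each $j$. Taking the minimum over $j$ bounds the third term by $\min_j|\mu^\star_{-j}-\rho|+\epsilon$. Together with the estimation term, this accounts for the $2\epsilon$ slack. Lemma~\ref{lemma:1} is not needed for the inequality itself; it only explains why this minimum is smaller than $|\mu^\star-\rho|$, i.e.\ why pruning helps.

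\textbf{Statistical term.} The plan is to condition on the latent parameters $\{q_i^\star\}$ and on the estimates $\{q_i\}$. This relies on reading the setting as saying that the $q_i$ come from the quality head on partial rollouts, so that, given $q^\star$, they carry no further information about the final labels $Y_i$. Under this conditioning $\hat\jmath$ is a fixed index, and the $Y_i$ with $i\ne\hat\jmath$ are independent Bernoulli$(q_i^\star)$ variables. Their conditional mean is $\mu^\star_{-\hat\jmath}$. Hoeffding's inequality for $G-1$ variables in $[0,1]$ then gives
\[
\Pr\Big(\big|\hat p_{-\hat\jmath}-\mu^\star_{-\hat\jmath}\big|>t\Big)\le 2e^{-2(G-1)t^2}.
\]
Setting $t=\sqrt{\log(2/\delta)/(2(G-1))}$ makes the failure probability at most $\delta$ conditionally. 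Integrating over the conditioning preserves the bound unconditionally. Combining the three bounds on this event yields the theorem.

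\textbf{Main obstacle.} The delicate step is justifying that $\hat\jmath$ is independent of the $Y_i$ given $q^\star$. If the estimates could depend on the realized labels, $\hat\jmath$ would be data-dependent and the direct Hoeffding step would fail. I would make this conditional-independence reading explicit as the interpretation of the stated setting. As a fallback, a union bound over all $G$ candidate indices $j$ makes the argument valid for any selection rule. The cost is replacing $\log(2/\delta)$ by $\log(2G/\delta)$, which changes the stated constant, so the primary route is the conditioning argument.
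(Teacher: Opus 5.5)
Your proposal is correct and is essentially the paper's proof. Your three-term split is the paper's triangle inequality combined with Lemma~\ref{lemma:A.2}, which produces the $2\epsilon$ exactly as your estimation and optimization terms do, and your statistical step is Lemma~\ref{lemma:A.3}, where the paper likewise conditions on $\hat{\jmath}=j$ and simply asserts that the kept $Y_i$ remain independent Bernoulli$(q_i^\star)$, i.e.\ the conditional-independence reading you make explicit. One small point in your favor: applying Hoeffding to $Y_i\in[0,1]$ gives the stated constant $e^{-2(G-1)t^2}$ directly, whereas the paper plugs in the range $[-1,1]$, which, computed correctly, would only yield $e^{-(G-1)t^2/2}$.
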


\noindent
The proof can be found in Appendix~\ref{appx:proof}. It implies that pruning can reduce the posterior-mean deviation to a target ratio $\rho$, and if the posterior estimates $q_i$ are accurate, then posterior-guided pruning is $O(\epsilon)$-close to the true-posterior pruning in terms of deviation from $\rho$. Therefore, setting $\rho=0.5$ can enhance the learning signals and improve the effectiveness of training. 
\subsection{Quality Prediction Head}
\label{sec:quality}
\begin{figure*}[t]
\begin{center}
\includegraphics[scale=0.55]{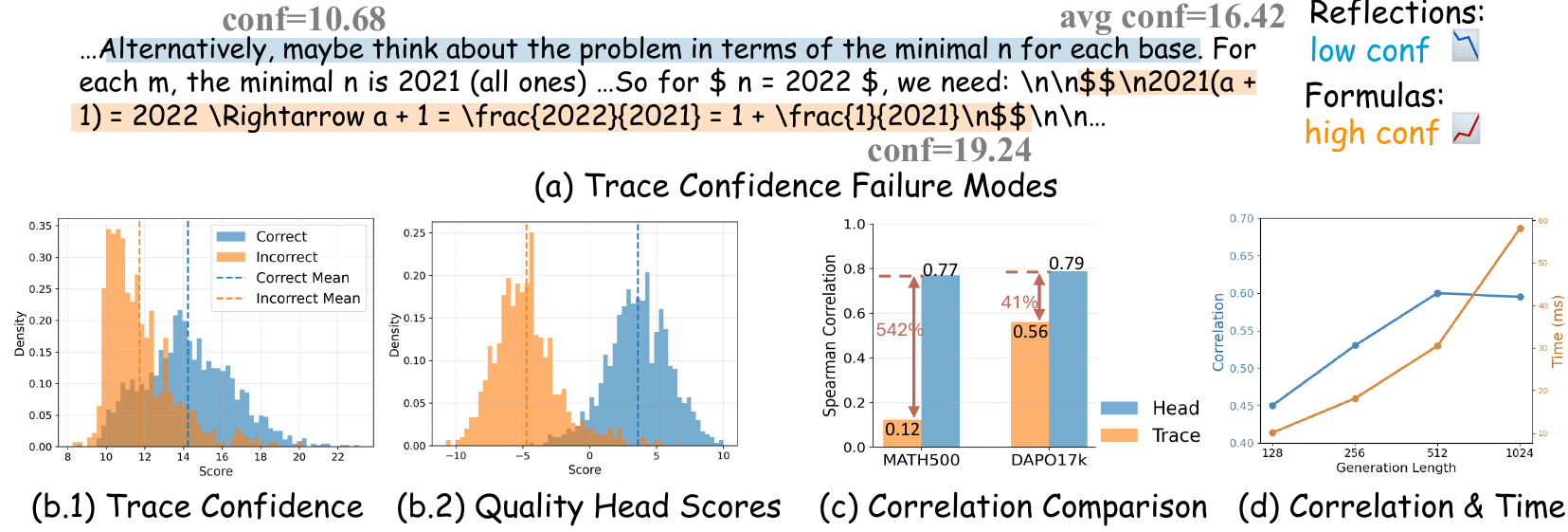}
\end{center}
\vspace{-5pt}
\caption{(a) Trace Confidence Failure Modes: Reflection-related tokens tend to receive low confidence despite being beneficial, whereas formula-heavy tokens can receive high confidence even under incorrect reasoning. (b) Distribution Comparison. Trace confidence in (b.1) is less separable between correct/incorrect than quality head scores in (b.2).
(c) Correlation Comparison. Quality-head scores achieve consistently higher correlation, measured by Spearman rank correlation between the predicted scores (quality scores or trace confidence) and the binary correctness of final answers on the Math500 and Dapo17k datasets. (d) Generation Length v.s. Correlation \& Time Cost. The time cost increases as the generation length increases, while the correlation plateaus when the length reaches 512. All the data is generated by Qwen3-4B model on 400 prompts from Dapo17k and Math500 dataset with 10 rollouts per sample.
\vspace{-8pt}
} 
\label{fig:conf}
\end{figure*}

We have shown that if we have an accurate posterior estimate $q_i$ for each rollout, we can prune rollouts to control the within-group positive ratio and improve the learning signal.
However, $q_i$ is not directly observable during generation, especially when we want to prune a rollout early.
To operationalize Sec.~\ref{sec:balanced}, we first construct an early-stage \textit{quality score $s_i$} for a partial rollout, and then map this score to a posterior estimate $q_i\in[0,1]$.

\paragraph{Quality Score Prediction.} 
Previous studies~\citep{kang2025scalable,fu2025deep} introduce internal uncertainty signals, trace confidence, based on the log-probability of next tokens, which can serve as \textit{quality scores $s_i$}. 
However, these metrics are only indirect proxies of rollout quality. Since they are computed from token-level likelihood without task supervision, they are not guaranteed to align with the final success label.
As shown in Fig.~\ref{fig:conf}(a), a failure example indicates that reflection-related tokens tend to receive low confidence despite being beneficial, whereas formula-heavy tokens can receive high confidence even under incorrect reasoning. 
Therefore, we turn to the model’s hidden representations to generate quality scores of rollouts.
Like the next token prediction head in language models, we can also add a \textit{rollout quality head} to the backbone model, which can be a simple 2-layer MLP. Since RL naturally provides labeled rollouts, we can train the quality head on-the-fly using cross-entropy loss, whose gradient will be detached from backbone model to avoid possible overhead.
To evaluate the effectiveness of the scores from quality head, we collect 4,000 rollouts from two datasets.
As shown in Fig.~\ref{fig:conf}(b), the scores given by quality head as quality scores can distinguish the correct rollouts from incorrect ones, with separable distributions of the two categories. Also, quality head scores show a stable Spearman rank correlation across datasets (Fig.~\ref{fig:conf}(c)). 

\paragraph{Detection Length.} Training-time pruning requires choosing an intermediate length to evaluate the quality score. Fig.~\ref{fig:conf}(d) reports the correlation between intermediate quality head scores and final correctness, as well as the generation time to reach each length. We find that early detection is reliable, and choose $L_{\text{detect}}=512$ to balance pruning reliability and time cost.
\paragraph{Probability Calibration.} Given the quality-head score $s_i$, we need a probability-like posterior estimate $q_i\in[0,1]$ to instantiate Sec.~\ref{sec:balanced}.
Since the raw score scale may shift during training, we adopt an \emph{online binned probability estimator} to map scores to posteriors~\citep{zadrozny2001obtaining}. 
Concretely, we first normalize the score to $s'_i\in[0,1]$ and assign it to one of $B$ uniform bins denoted as $b(s')$.
For each bin, we maintain the numbers of historical positive and negative rollouts (with a sliding buffer), and estimate the posterior success probability by:
\begin{equation*}
\begin{split}
q(&s')=P(Y=1|b(s'))\\&=\frac{\pi P(b(s')|Y=1)}{\pi P(b(s')| Y=1)+(1-\pi)P(b(s')|Y=0)},
\end{split}
\end{equation*}
where $\pi=P(Y=1)$, $P(b(s')| Y=0)$ and $P(b(s')| Y=1)$ are estimated by historical information from previous steps maintaining with a sliding buffer. 

With $q_i=P(y=1|s_i)$ as an estimate of the rollout success probability, we assign each rollout a \emph{survival probability} $p_i$.
The design goal is two-fold: (i) the expected keep ratio matches a target $\kappa$, and (ii) the kept rollouts have a controlled positive ratio close to $\rho$.
We achieve this by defining $p_i$ as a monotonic function of $(\rho - q_i)$ and normalizing it to satisfy the keep-rate constraint.
Sampling rollouts according to $\{p_i\}$ allows us to prune multiple rollouts in one step while steering the within-group balance toward $\rho$. More details can be found in Appendix~\ref{appx:estimator}.

\subsection{System Design}

To enable efficient RL training, we adopt the commonly used framework verl~\citep{sheng2024verl} of frontend-backend architecture and implement rollout pruning inside the generation backend, vLLM~\citep{kwon2023efficient}. An overview is given in Fig.~\ref{fig:system}.
In each training step, we (i) generate rollouts, (ii) compute log-probabilities/advantages, and (iii) update the policy.
The frontend orchestrates data and runs log-probability computation and policy updates, while the backend provides high-throughput rollout generation.
\textbf{(i) Backend.}
The frontend sends rollout-generation requests to the backend.
The backend maintains a request pool and dynamically batches active sequences for GPU execution.
When a rollout first reaches the detection length $L_{\mathrm{detect}}$, the backend evaluates its quality and samples a pruning decision according to the survival probability (Sec.~\ref{sec:quality}).
\begin{figure}
\centering{
\includegraphics[scale=0.47]{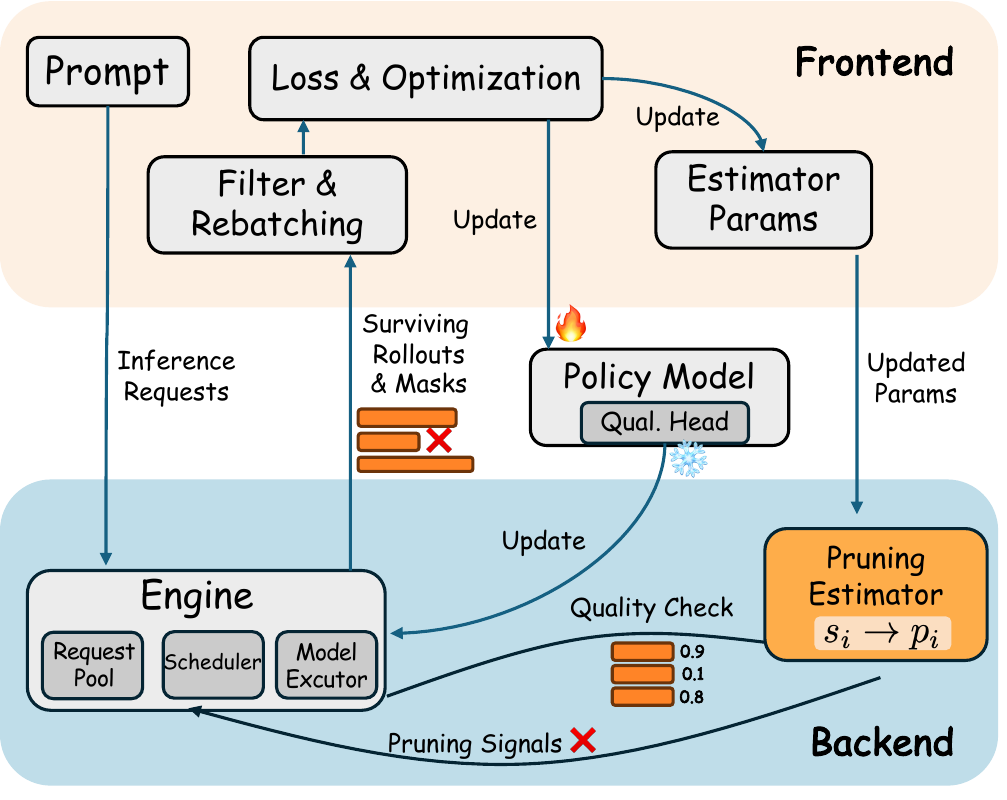}
\caption{Illustration of System Design.}
\label{fig:system}}
\vspace{-10pt}
\end{figure}
Pruned rollouts are immediately removed from the request pool, so the scheduler can reallocate the freed capacity to other active sequences, reducing the overall generation time without lowering GPU utilization.
\textbf{(ii) Frontend.}
The backend returns the pruning masks together with the generated rollouts.
The frontend filters out pruned rollouts and re-batches the surviving ones to compute log-probabilities and advantages, followed by policy optimization.
This reduces the log-probability computation and backpropagation cost roughly in proportion to the number of surviving rollouts.
\textbf{(iii) Quality head.}
Each rollout is naturally labeled by its final reward, so we can collect training data for the quality head on the fly.
We update the quality head with a cross-entropy loss, while stopping gradients to the backbone model.
As a result, the additional overhead is negligible. For more details, please refer to Fig.~\ref{fig:system} and Appendix~\ref{appx:algorithm}.

\subsection{Test-time Scaling.}
At test time, the trained quality head can also naturally serve as a correctness predictor.
Given a set of completed reasoning traces, we apply the head to obtain a score $s_i$ for each candidate.
Instead of a naive majority vote, we use these scores to calculate voting weights.
Since $s_i$ is an uncalibrated logit-like score, we convert scores to rank-based weights by sorting candidates and linearly rescaling their ranks to $[0,1]$. 
\section{Experiment}
\subsection{Experimental Settings} We build system based on verl~\citep{sheng2024verl} framework for training and vLLM~\citep{kwon2023efficient} framework as inference engine.
We conduct experiments on LLMs of different sizes and different series: Qwen-3 (1.7B, 4B, 8B), and LLaMA-3.2 (1.7B). We compare vanilla GRPO~\citep{grpo} and DAPO~\citep{dapo} with their \arrol-equipped variants on Math500~\citep{math}, OlympiadBench~\citep{olympiadbench}, and MinervaMath~\citep{minervamath} using average accuracy, and on AMC'23, AIME'24, and AIME'25 using pass@16. When training with our method, we use a cold-start period of 20 steps to initialize the quality head and the pruning estimator. For test-time scaling, we report maj@32 and compare our method against vanilla GRPO and trace score from DeepConf~\citep{fu2025deep}. More details about the metrics and datasets can be found in Appendix~\ref{appx:datasets}. 
Models are trained on the Dapo-Math-17K~\citep{dapo} dataset with a maximum sequence length of 8,192. The learning rate is set to $1\times 10^{-6}$, and the group size is set to 16. Other hyperparameters include $\kappa=0.5$, $\rho=0.5$, $L_\text{detect}=512$. For GRPO algorithm, we set $\epsilon_{low}=\epsilon_{high}=0.2$, and for DAPO algorithm, we change $\epsilon_{high}$ to 0.28. The experiments are conducted on NVIDIA GH200 GPUs. 
\subsection{Main Results}

\paragraph{Performances on GRPO.}

We report performance comparisons between vanilla GRPO and GRPO equipped with \arrol, as shown in Table~\ref{tab:grpo_comparison}. Across most benchmarks, \arrol\ consistently outperforms vanilla GRPO, improving average accuracy by \(+2.30\) to \(+2.87\) on the Qwen-3 series and by \(+2.86\) on LLaMA-3.2. Notably, the gains are larger on harder benchmarks: for example, \arrol\ improves AMC'23 by \(+7.50\) on Qwen-3-1.7B and improves AIME'24 by \(+10.00\) on Qwen-3-8B (and \(+6.67\) on AIME'25). Meanwhile, we observe small regressions on a few datasets (e.g., Minervamath on Qwen-3-1.7B/4B), but the overall improvement remains consistent.
In addition to better performance, \arrol\ substantially reduces training cost, achieving a stable \(1.6\!-\!1.7\times\) end-to-end speedup across model sizes. These results support a ``less is more'' effect: pruning yields fewer but more balanced samples, leading to both higher accuracy and better efficiency, and the gains are robust across model families and sizes. Finally, the quality head reaches \(\sim\)80\% prediction accuracy (e.g., \(82.37\%\) on Qwen3-1.7B), indicating it can be reliably trained within our pipeline for early-stage pruning decisions.

\paragraph{Performances on DAPO.} We further evaluate \arrol\ on DAPO by comparing vanilla DAPO with its \arrol-equipped variant. The results are reported in Table~\ref{tab:dapo_comparison}. Overall, \arrol\ achieves the best performance on most benchmarks, improving average accuracy by \(+2.99\) while maintaining a \(1.70\times\) end-to-end speedup. These results suggest that \arrol\ generalizes well across RLVR algorithms and delivers consistent efficiency gains.

\begin{table*}[!tb]

\centering{
\caption{Performance Comparison on GRPO. We compare our method with vanilla GRPO on six benchmarks across four models, and also report speedup.}{

\vspace{-7pt}
\setlength\tabcolsep{3pt}
    
\fontsize{10.5pt}{9.5pt}
\label{tab:grpo_comparison}
\begin{tabular}{l|cccccc|cc}
\toprule
  \textbf{Method}   &  \textbf{Math500} & \textbf{Minervamath} & \textbf{OlympiadBench} & \textbf{AMC'23} & \textbf{AIME'24} & \textbf{AIME'25} & \textbf{Avg} & \textbf{Speedup}\\
\midrule
     \multicolumn{9}{c}{\textit{\textbf{Qwen-3-1.7B-Base}}}\\
\midrule
GRPO & 60.89 & \textbf{17.65} & 18.55 & 75.00 & 20.00 & \textbf{16.67} & 34.79&-\\
\rowcolor{purple!10}+\arrol & \textbf{62.30} & 16.91 & \textbf{20.81} & \textbf{82.50}& \textbf{23.33} &\textbf{16.67} & \textbf{37.09}&1.61$\times$\\

\midrule
     \multicolumn{9}{c}{\textit{\textbf{Qwen-3-4B-Base}}}\\
\midrule
GRPO &79.64 & \textbf{30.88}  & 31.37& 87.50 & 36.67& 40.00 & 51.01 & -\\
\rowcolor{purple!10}+\arrol & \textbf{80.04} & 28.67 &\textbf{33.33} & \textbf{92.50} &\textbf{40.00} & \textbf{46.67} & \textbf{53.54}&1.63$\times$\\

\midrule
     \multicolumn{9}{c}{\textit{\textbf{Qwen-3-8B-Base}}}\\
\midrule
GRPO & 81.25 & 32.36 & \textbf{34.69} & \textbf{95.00}& 56.67 & \textbf{40.00} & 56.66&-\\
\rowcolor{purple!10}+\arrol & \textbf{81.45} & \textbf{34.19} & 33.18 & \textbf{95.00} & \textbf{66.67} & \textbf{46.67} & \textbf{59.53}&1.62$\times$ \\
\midrule
 \multicolumn{9}{c}{\textit{\textbf{LLama-3.2-1B-Instruct}}}\\
\midrule
GRPO  &24.20 &3.31& 2.26 & 45.00&13.33 & 0.00 & 14.63&-\\
\rowcolor{purple!10}+\arrol &\textbf{29.03} &\textbf{4.04} &\textbf{4.37} & \textbf{47.50}&\textbf{16.67} & \textbf{3.33} & \textbf{17.49} &1.67$\times$\\
\bottomrule
\end{tabular}
}
}ARRoL

\vspace{2pt}

\centering{
\caption{Performance Comparison on DAPO. We compare our method with vanilla DAPO on six benchmarks on Qwen-3-1.7B-Base, and also report speedup.}
\label{tab:dapo_comparison}

{
\setlength\tabcolsep{3pt}
\fontsize{10.5pt}{9.5pt}
\begin{tabular}{l|cccccc|cc}
\toprule
  \textbf{Method}   &  \textbf{Math500} & \textbf{Minervamath} & \textbf{OlympiadBench} & \textbf{AMC'23} & \textbf{AIME'24} & \textbf{AIME'25} & \textbf{Avg} &\textbf{Speedup}\\
\midrule
     \multicolumn{9}{c}{\textit{\textbf{Qwen-3-1.7B-Base}}}\\
\midrule
DAPO & \textbf{62.10} & \textbf{20.96} & 20.51 & \textbf{75.00}& 20.00 & 20.00 & 36.43&- \\
\rowcolor{purple!10}+\arrol & \textbf{62.10} & \textbf{20.96} & \textbf{20.97} & 72.50 & \textbf{33.33} &\textbf{26.67} & \textbf{39.42}&$1.70\times$ \\
\bottomrule
\end{tabular}
}
}
\end{table*}


\paragraph{Test-time Scaling.} To evaluate the quality head at inference time, we compare \arrol\ against vanilla majority voting and DeepConf~\citep{fu2025deep}, which uses log-likelihood-based trace confidence as voting weights for final-answer aggregation. As shown in Table~\ref{tab:test_time}, while DeepConf improves over majority vote, the learned quality head provides consistent gains across datasets and models, yielding up to \(+8.33\) additional improvement over DeepConf. These results suggest that the quality head trained during RLVR can serve as reliable confidence weights for test-time voting, outperforming model-intrinsic heuristic signals (e.g., DeepConf) that are not guaranteed to align with final-answer correctness.

\begin{table}
\caption{Performance Comparison of Test-time Voting. We compare our method with GRPO and Deepconf method on three benchmarks across three models.}
\label{tab:test_time}
\begin{center}
\begin{tabular}{l cccc}
\toprule
  \textbf{Method}  & \textbf{AMC'23} & \textbf{AIME'24} & \textbf{AIME'25}\\
\midrule
     \multicolumn{4}{c}{\textit{\textbf{Qwen-3-1.7B-Base}}}\\
\midrule
Majority & 55.0 & 16.7&3.3\\
Deepconf & 57.5&16.7 & 6.7\\
\rowcolor{purple!10}\arrol & \textbf{60.0} & \textbf{23.3} & \textbf{13.3}\\

\midrule
     \multicolumn{4}{c}{\textit{\textbf{Qwen-3-4B-Base}}}\\
\midrule
Majority & 72.5& 26.7 & 20.0\\
Deepconf &72.5 & 33.3 & 23.3\\
\rowcolor{purple!10}\arrol & \textbf{82.5} & \textbf{36.7} & \textbf{26.7}\\
\midrule
     \multicolumn{4}{c}{\textit{\textbf{Qwen-3-8B-Base}}}\\
\midrule
Majority & 75.0& 23.3 &26.7
\\
Deepconf & 80.0 & 23.3 &23.3\\
\rowcolor{purple!10}\arrol & \textbf{85.0} & \textbf{33.3} &  \textbf{33.3}  \\
\midrule
     \multicolumn{4}{c}{\textit{\textbf{LLama-3.2-1B-instruct}}}\\
\midrule
Majority & 10.0& 0.0& \textbf{0.0}\\
Deepconf & 15.0 & 3.3& \textbf{0.0}\\
\rowcolor{purple!10}\arrol  & \textbf{17.5} & \textbf{10.0}& \textbf{0.0}\\
\bottomrule
\end{tabular}
\end{center}
\end{table}

\subsection{Ablation Studies}
\begin{table}[!ht]
\begin{center}
\vspace{-4pt}
\caption{Performance Comparison against Random Pruning. $\hat{\rho}$ is the fraction of positive (reward=1) rollouts during training. For binary rewards, $\hat{\rho}(1-\hat{\rho})$ is proportional to the within-group reward variance and is maximized at $\hat{\rho}=0.5$.}
\label{tab:random}

\setlength\tabcolsep{2pt}
\hspace{-1pt}
{
\fontsize{10pt}{10.5pt}
\selectfont
\begin{tabular}{l ccccc}
\toprule
  \textbf{Method}  & \textbf{AMC23} & \textbf{AIME24} & \textbf{AIME25} & \textbf{$\mathbb{E}[{\hat{\rho}}]$}& \textbf{$\mathbb{E}[{\hat{\rho}}(1-\hat\rho)]$}\\
\midrule
     \multicolumn{6}{c}{\textit{\textbf{Qwen-3-4B-Base}}}\\
\midrule
Random & 79.34 & 26.47 & 31.98 & 0.32 & 0.21\\
\arrol &80.04 & 28.67 &33.33 & 0.40 & 0.23\\

\midrule
     \multicolumn{6}{c}{\textit{\textbf{LLama-3.2-1B-instruct}}}\\
\midrule
Random & 22.18 & 2.94 & 3.02 & 0.14 & 0.11\\
\arrol &  29.03 & 4.04 & 4.37 & 0.23 & 0.14\\
\bottomrule
\end{tabular}
}
\vspace{-12pt}
\end{center}
\end{table}

\begin{table}[!ht]
\begin{center}
\caption{Efficiency decomposition for Qwen-3-1.7B-Base across training phases: rollout generation, log-probability computation, and model update.}
\label{tab:decomposition}

\setlength{\tabcolsep}{2pt}
\renewcommand{\arraystretch}{0.8}
\fontsize{10pt}{10.5pt}
\selectfont
\begin{tabular}{l ccc}
\toprule
\textbf{Time/s} & \textbf{Generation} & \textbf{Logprob} & \textbf{Update} \\
\midrule
GRPO & 106.82 &18.40 &63.05\\
\arrol & 72.96 (1.46$\times$) & 10.02 (1.84$\times$)& 30.26 (2.08$\times$)\\

\bottomrule
\end{tabular}
\end{center}
\vspace{-12pt}
\end{table}

\begin{table}
\begin{center}
\caption{
Average accuracy on Math500, MinervaMath, and OlympiadBench, and training speedup under different rollout keep ratios $\kappa$ for Qwen-3-1.7B-Base.}

\label{tab:abl_r}

\begin{tabular}{l cccc}
\toprule
  \textbf{$\kappa$} &   \textbf{0.25} & \textbf{0.5} & \textbf{0.75} & \textbf{1} \\
\midrule
Avg Acc  & 32.46 & 33.34& 32.68& 32.36\\
Speedup & 2.33$\times$ &1.61$\times$ & 1.17$\times$& 1.00$\times$\\

\bottomrule
\end{tabular}
\end{center}
\vspace{-12pt}
\end{table}

\paragraph{Comparison with Random Pruning.}
To validate the effectiveness of \arrol, we compare it with random pruning in Table~\ref{tab:random}. \arrol\ consistently outperforms random pruning across datasets. 
We further report the within-group positive ratio during training. Specifically, for each prompt group we compute $\hat{\rho}$ as the fraction of positive (reward=1) rollouts during training, and report $\mathbb{E}[\hat{\rho}]$ and $\mathbb{E}[\hat{\rho}(1-\hat{\rho})]$ (average across groups). 
For binary rewards, $\hat{\rho}(1-\hat{\rho})$ is proportional to the within-group reward variance and is maximized at $\hat{\rho}=0.5$~\citep{bae2025online}, indicating stronger non-degenerate learning signals for group-normalized updates. 
As shown in Table~\ref{tab:random}, \arrol\ drives groups closer to balanced outcomes 0.5 and increases $\mathbb{E}[\hat{\rho}(1-\hat{\rho})]$, which is consistent with stronger learning signals and better final performance.

\paragraph{Efficiency Decomposition.} We further analyze the source of the efficiency gains by decomposing training time into different phases, as shown in Table~\ref{tab:decomposition}. Overall, our method accelerates all phases. For log-probability computation and model updates, the time is reduced by about \(2\times\), since pruning discards roughly half of the rollouts. In contrast, the speedup for rollout generation is smaller (\(1.46\times\)), because we first generate all sequences up to a threshold length \(L_{\text{detect}}\) before pruning half of the rollouts.

\paragraph{Ablation Study on keep ratio $\kappa$.} In our main experiments, we set the keep ratio $\kappa$ to 0.5. To study how $\kappa$ affects both effectiveness and efficiency, we evaluate several values of $\kappa$. As shown in Table~\ref{tab:abl_r}, a smaller $\kappa$ yields larger speedup since fewer rollouts are kept during pruning. Performance generally improves as $\kappa$ decreases, suggesting that pruning can also help by selecting more balanced sample subset. However, when $\kappa=0.25$, too many rollouts are removed, leading to a slight performance drop. Overall, $\kappa=0.5$ provides a good trade-off between accuracy and efficiency.

\paragraph{Wall-clock convergence.} We evaluate wall-clock convergence by plotting training reward against 
\begin{wrapfigure}{r}{0.6\columnwidth}
  \centering
  \hspace{-3pt}
  \vspace{-4pt}
  \includegraphics[width=\linewidth]{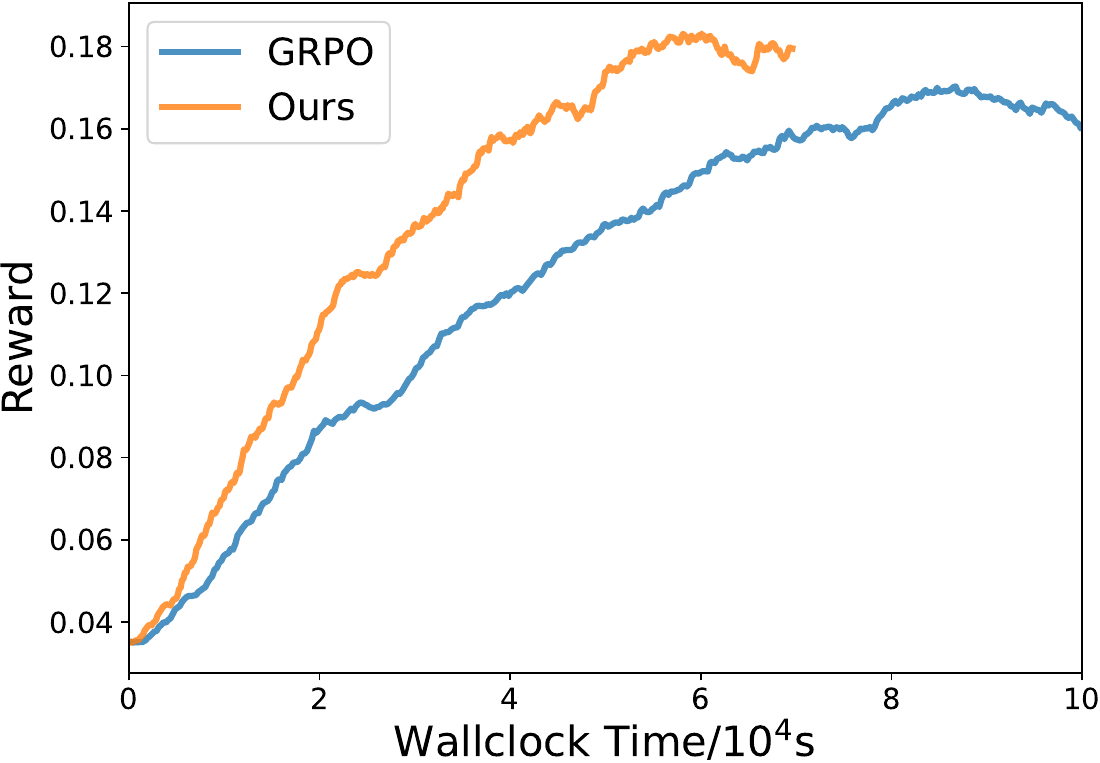}
  \caption{Wall-clock convergence of Qwen-3-1.7B-Base training.}
  \label{fig:convergence}
  \vspace{-10pt}
  \hspace{-3pt}
\end{wrapfigure}
wall-clock time, as shown in Fig.~\ref{fig:convergence}. Compared with GRPO, our pruning based training reaches the same reward level in less time and attains higher reward earlier across most of training, indicating improved time-to-reward and faster wall-clock convergence.

\section{Conclusion}
We presented \arrol, an online rollout pruning approach for RLVR that prunes rollouts \emph{during} generation while explicitly steering the surviving group toward a more balanced 0/1 reward composition, strengthening learning signals. \arrol\ trains a lightweight quality head on-the-fly to predict the success probability of partial rollouts, and uses them to make early pruning decisions under a target keep ratio. To realize efficiency gains, we further implemented a system design. Empirically, on different models, \arrol\ consistently improves accuracy while achieving up to 1.7$\times$ training speedup. Moreover, the learned quality head can also be used at test time as voting weights, yielding additional gains over naive majority voting. Overall, \arrol\ demonstrates a practical ``less rollouts, more learning'' paradigm for efficient and effective RLVR training and test-time scaling.

\section*{Limitations}
Our study mainly focuses on mathematical RLVR tasks with verifiable rewards; while the core idea (online pruning guided by a correctness predictor and balance control) is general and could be extended to other reward-based RL scenarios, such as UI interaction or tool-use agents, we do not validate these domains in this work. In addition, training-time pruning needs to generate tokens up to an intermediate detection length to evaluate partial rollouts (we set $L_{\text{detect}}=512$), so the rollout-generation speedup can be smaller than the savings in later phases because sequences must reach this threshold before pruning takes effect.

\bibliography{custom}

\newpage
\appendix

\section{Appendix}
\label{sec:appendix}

\subsection{Proof of Theorems in Sec.~\ref{sec:balanced}}

\label{appx:proof}
Consider a mini-batch of size \(G\), indexed by \(i\in\{1,\dots,G\}\).
Each sample has a binary label \(Y_i\in\{0,1\}\).
We assume conditional independence given latent Bernoulli parameters:
\[
Y_i \mid q_i^\star \ \sim\ \mathrm{Bernoulli}(q_i^\star),\qquad
\]
We assume $\{Y_i\}_{i=1}^G$ are independent given $\{q_i^\star\}_{i=1}^G$.
Here \(q_i^\star := \mathbb{P}(Y_i=1\mid X_i)\) is the true posterior.
We have an estimated posterior \(q_i\) satisfying a uniform accuracy condition
\[
|q_i - q_i^\star| \le \epsilon,\quad \forall i.
\]
For any index \(j\), define the \emph{true expected} positive ratio after removing \(j\):
\[
\mu^\star_{-j} \ :=\ \frac{1}{G-1}\sum_{i\ne j} q_i^\star.
\]
Likewise define the \emph{estimated} ratio based on \(\{q_i\}\):
\[
\mu_{-j} \ :=\ \frac{1}{G-1}\sum_{i\ne j} q_i.
\]
We consider the posterior-guided pruning rule
\[
\hat{\jmath}\ :=\ \arg\min_{j\in[G]} |\mu_{-j}-\rho|,
\]
where $\rho\in[0,1]$ is a fixed ratio.

\noindent
\textbf{Lemma~\ref{lemma:1}} (Existence of an improving prune)\textit{
If $\mu^\star>\rho$ and there exists an index $j$ such that $q_j^\star>\mu^\star$,
then pruning $j$ strictly reduces the deviation to $\rho$:
\[
\big|\mu^\star_{-j}-\rho\big| < \big|\mu^\star-\rho\big|.
\]
Symmetrically, if $\mu^\star<\rho$ and there exists $j$ such that $q_j^\star<\mu^\star$,
then the same conclusion holds.
}

\begin{proof}
Assume \(\mu^\star > \rho\). If \(q_j^\star > \mu^\star\), then
\[
\mu^\star_{-j}-\mu^\star
= \frac{G\mu^\star - q_j^\star}{G-1} - \mu^\star
= \frac{\mu^\star - q_j^\star}{G-1} < 0,
\]
so \(\mu^\star_{-j} < \mu^\star\). Since \(\rho < \mu^\star\), moving \(\mu^\star\) downward
moves it toward \(\rho\), hence
\(|\mu^\star_{-j}-\rho| < |\mu^\star-\rho|\).
The other case is symmetric.
\end{proof}
\begin{lemma}[Posterior error transfers to batch ratio]\label{lemma:A.1}
Given $|q_i - q_i^\star| \le \epsilon$, $\forall i$, for any \(j\), we have
\[
|\mu_{-j}-\mu^\star_{-j}|\ \le\ \epsilon.
\]
\end{lemma}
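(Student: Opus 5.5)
The claim is a direct consequence of the definitions together with the triangle inequality, so the plan is to write the difference of the two averages as a single average of pointwise errors. Fix any index $j\in[G]$. Since $\mu_{-j}$ and $\mu^\star_{-j}$ are both averages over the same index set $\{i\neq j\}$ of size $G-1$, we have
\[
\mu_{-j}-\mu^\star_{-j} \;=\; \frac{1}{G-1}\sum_{i\ne j}\bigl(q_i-q_i^\star\bigr).
\]

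Taking absolute values and applying the triangle inequality gives
\[
|\mu_{-j}-\mu^\star_{-j}| \;\le\; \frac{1}{G-1}\sum_{i\ne j}|q_i-q_i^\star|.
\]
Next I would invoke the uniform accuracy condition $|q_i-q_i^\star|\le\epsilon$ for every $i$. This bounds each of the $G-1$ summands by $\epsilon$, so the average is at most $\frac{(G-1)\epsilon}{G-1}=\epsilon$.

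There is no real obstacle here. The only points to state explicitly are that $G\ge 2$, so that the normalization $1/(G-1)$ is well defined, and that the bound holds uniformly in $j$. Uniformity in $j$ matters because this lemma will later be applied at the data-dependent index $\hat{\jmath}$, and also at the minimizer of $|\mu^\star_{-j}-\rho|$, in the proof of Theorem~\ref{theorem:1}. The bound is deterministic and does not use the Bernoulli model at all, so no probabilistic argument is needed at this stage.
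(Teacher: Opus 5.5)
Your proposal is correct and follows the paper's proof: write $\mu_{-j}-\mu^\star_{-j}$ as the average of $q_i-q_i^\star$ over $i\neq j$, apply the triangle inequality, and bound each of the $G-1$ terms by $\epsilon$. Your added remarks, that $G\ge 2$ is needed and that the bound is deterministic and uniform in $j$, are consistent with how the lemma is later used at $\hat{\jmath}$ and $j^\star$.
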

\begin{proof}
\begin{equation*}
\begin{split}
|\mu_{-j}-\mu^\star_{-j}|&=
\Big|\frac{1}{G-1}\sum_{i\ne j}(q_i-q_i^\star)\Big|\\
&\le \frac{1}{G-1}\sum_{i\ne j}|q_i-q_i^\star|\\
& = \frac{1}{G-1}(G-1)\epsilon
\le \epsilon.
\end{split}
\end{equation*}
\end{proof}

\begin{lemma}[Near-optimality of posterior-guided pruning]
Let \(j^\star:=\arg\min_j |\mu^\star_{-j}-\rho|\) be the best (oracle) removal index.
Then given $|q_i - q_i^\star| \le \epsilon$, $\forall i$, we have
\[
|\mu^\star_{-\hat{\jmath}}-\rho|
\ \le\ 
\min_{j}|\mu^\star_{-j}-\rho| + 2\epsilon
\ =\ 
|\mu^\star_{-j^\star}-\rho| + 2\epsilon.
\]\label{lemma:A.2}
\end{lemma}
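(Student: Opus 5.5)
The plan is a standard three-step triangle-inequality argument. It combines Lemma~\ref{lemma:A.1} with the defining optimality of $\hat{\jmath}$ for the estimated ratios.

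\textbf{Step 1.} Apply Lemma~\ref{lemma:A.1} at the index $\hat{\jmath}$. By the triangle inequality,
\[
|\mu^\star_{-\hat{\jmath}}-\rho| \le |\mu^\star_{-\hat{\jmath}}-\mu_{-\hat{\jmath}}| + |\mu_{-\hat{\jmath}}-\rho| \le \epsilon + |\mu_{-\hat{\jmath}}-\rho|.
\]
This moves us from the true ratio to the estimated one.

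\textbf{Step 2.} Use the definition $\hat{\jmath}=\arg\min_j|\mu_{-j}-\rho|$. It gives $|\mu_{-\hat{\jmath}}-\rho|\le|\mu_{-j^\star}-\rho|$ for the oracle index $j^\star$; any fixed index would work here.

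\textbf{Step 3.} Apply Lemma~\ref{lemma:A.1} again, now at $j^\star$:
\[
|\mu_{-j^\star}-\rho|\le |\mu_{-j^\star}-\mu^\star_{-j^\star}| + |\mu^\star_{-j^\star}-\rho| \le \epsilon + |\mu^\star_{-j^\star}-\rho|.
\]
Chaining Steps 1--3 yields $|\mu^\star_{-\hat{\jmath}}-\rho|\le|\mu^\star_{-j^\star}-\rho|+2\epsilon$. By the definition of $j^\star$, $|\mu^\star_{-j^\star}-\rho|$ equals $\min_j|\mu^\star_{-j}-\rho|$, which is the claimed bound.

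There is no genuine obstacle here; the argument is deterministic and uses only the uniform accuracy condition. The one point needing care is tie-breaking in the two argmins. The inequality in Step 2 holds for every minimizer, so any fixed tie-breaking rule for $\hat{\jmath}$ and $j^\star$ suffices. Also, because $\hat{\jmath}$ is a function of the $q_i$ alone and not of the labels $Y_i$, this lemma can later be combined with a Hoeffding bound on $\hat p_{-\hat{\jmath}}$, conditioned on the $q$'s, to obtain Theorem~\ref{theorem:1}.
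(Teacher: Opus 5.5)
Your proposal is correct and follows the paper's proof essentially line for line. You apply the triangle inequality with Lemma~\ref{lemma:A.1} at $\hat{\jmath}$, use the argmin property of $\hat{\jmath}$ to pass to $j^\star$, and apply Lemma~\ref{lemma:A.1} again at $j^\star$, which gives the $2\epsilon$ slack.
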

\begin{proof}
By triangle inequality and Lemma~\ref{lemma:A.1},
\begin{equation*}
\begin{split}
|\mu^\star_{-\hat{\jmath}}-\rho|
&\le |\mu_{-\hat{\jmath}}-\rho| + |\mu_{-\hat{\jmath}}-\mu^\star_{-\hat{\jmath}}| \\
&\le |\mu_{-\hat{\jmath}}-\rho| + \epsilon.
\end{split}
\end{equation*}
Since \(\hat{\jmath}=\arg\min\limits_j|\mu_{-j}-\rho|\),

\begin{equation*}
\begin{split}
|\mu_{-\hat{\jmath}}-\rho|&\le |\mu_{-j^\star}-\rho| \\
&\le |\mu^\star_{-j^\star}-\rho| + |\mu_{-j^\star}-\mu^\star_{-j^\star}| \\
&\le |\mu^\star_{-j^\star}-\rho| + \epsilon.
\end{split}
\end{equation*}
Then we have \(
|\mu^\star_{-\hat{\jmath}}-\rho|
\ \le\ 
|\mu^\star_{-j^\star}-\rho| + 2\epsilon.
\)
\end{proof}

\begin{lemma}[Concentration of realized ratio around its expectation]
Let
\(
\hat{p}_{-\hat{\jmath}}:=\frac{1}{G-1}\sum_{i\ne \hat{\jmath}} Y_i\), \(
\mu^\star_{-\hat{\jmath}}:=\frac{1}{G-1}\sum_{i\ne \hat{\jmath}} q_i^\star,
\)
where \(Y_i\mid q_i^\star\sim \mathrm{Bernoulli}(q_i^\star)\) are conditionally independent.
Then for any \(t>0\),
\[
P\Big(\big|\hat{p}_{-\hat{\jmath}}-\mu^\star_{-\hat{\jmath}}\big|\ge t\ \Big|\ \{q_i^\star\}\Big)
\ \le\ 2\exp\big(-2(G-1)t^2\big).
\]\label{lemma:A.3}
\end{lemma}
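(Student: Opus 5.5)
The plan is to apply Hoeffding's inequality conditionally on the latent parameters $\{q_i^\star\}$. There is one subtlety: the index $\hat{\jmath}$ is chosen by the rule $\arg\min_j|\mu_{-j}-\rho|$, which depends only on the estimates $\{q_i\}$. We therefore treat $\{q_i\}$ as fixed (measurable with respect to the conditioning information, i.e.\ determined by the $X_i$ and not by the labels $Y_i$). With this conditioning, $\hat{\jmath}$ is a deterministic index. If one prefers not to assume this, a fallback is to prove the bound for each fixed $j$ and note that $\hat{\jmath}$ is chosen independently of $\{Y_i\}$. We do \emph{not} want a union bound over $j$, since that would cost a factor $G$ in the tail.

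With $\hat{\jmath}=j$ fixed, the variables $\{Y_i\}_{i\ne j}$ form $G-1$ conditionally independent random variables taking values in $[0,1]$. Their conditional means are $\mathbb{E}[Y_i\mid q_i^\star]=q_i^\star$. Consequently the average $\hat{p}_{-j}=\frac{1}{G-1}\sum_{i\ne j}Y_i$ has conditional mean exactly $\mu^\star_{-j}$. Hoeffding's inequality for bounded independent variables, with each range of length $1$, gives for every $t>0$
\[
P\Big(\big|\hat{p}_{-j}-\mu^\star_{-j}\big|\ge t \,\Big|\, \{q_i^\star\}\Big)\le 2\exp\!\Big(-\frac{2(G-1)^2t^2}{\sum_{i\ne j}1^2}\Big)=2\exp\big(-2(G-1)t^2\big).
\]
Substituting $j=\hat{\jmath}$ then yields the lemma.

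The only real obstacle is justifying that $\hat{\jmath}$ may be treated as fixed, i.e.\ that the selection does not peek at the $Y_i$. This holds because $\hat{\jmath}$ depends only on the $q_i$, which are computed from partial rollouts (features) rather than from the realized final labels. Everything after that is a direct application of Hoeffding.

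To finish Theorem~\ref{theorem:1}, set $t=\sqrt{\log(2/\delta)/(2(G-1))}$, so that the tail probability equals $\delta$. Then combine the triangle inequality $|\hat{p}_{-\hat{\jmath}}-\rho|\le|\hat{p}_{-\hat{\jmath}}-\mu^\star_{-\hat{\jmath}}|+|\mu^\star_{-\hat{\jmath}}-\rho|$ with Lemma~\ref{lemma:A.2}.
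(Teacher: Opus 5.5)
Your proposal is correct and is essentially the paper's argument. The paper conditions on $\{q_k^\star\}$ and on $\hat{\jmath}=j$, and asserts without justification (whereas you make explicit that $\hat{\jmath}$ must not depend on the labels) that the kept labels remain independent Bernoulli$(q_i^\star)$. It then applies Hoeffding for each fixed $j$ and averages over $j$ by the law of total probability rather than a union bound, which is exactly your fallback.

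Your range-length-$1$ bookkeeping is also the right one. The paper's displayed denominator $\sum_{i\ne j}(1-(-1))^2$ for $Z_i\in[-1,1]$ would only yield $\exp(-(G-1)t^2/2)$, while the actual range $[-q_i^\star,1-q_i^\star]$ of length $1$ gives the stated $2\exp(-2(G-1)t^2)$.
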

\begin{proof}
Condition on the latent parameters \(\{q_i^\star\}_{i=1}^G\).
Further condition on the (possibly data-dependent) pruned index \(\hat{\jmath}\).
Given \(\hat{\jmath}=j\), the kept labels \(\{Y_i\}_{i\ne j}\) remain independent Bernoulli random variables with means
\(\mathbb{E}[Y_i\mid \{q_k^\star\}, \hat{\jmath}=j]=q_i^\star\) for all \(i\ne j\).
Define centered variables \(Z_i := Y_i-q_i^\star\) for \(i\ne j\).
Then \(\{Z_i\}_{i\ne j}\) are independent, satisfy \(\mathbb{E}[Z_i\mid \{q_k^\star\}, \hat{\jmath}=j]=0\),
and are bounded. Since \(Y_i\in\{0,1\}\) and \(q_i^\star\in[0,1]\),
\[
Z_i \in [-q_i^\star,\, 1-q_i^\star]\subseteq [-1,1].
\]
Additionally, we have
\[
\hat{p}_{-j}-\mu^\star_{-j}
=\frac{1}{G-1}\sum_{i\ne j} (Y_i-q_i^\star)
=\frac{1}{G-1}\sum_{i\ne j} Z_i.
\]
By Hoeffding's inequality~\citep{hoeffding1963probability},
for any \(t>0\), we have
\begin{equation*}
\begin{split}
&P\Big(\hat{p}_{-j}-\mu^\star_{-j}\ge t\ \Big|\ \{q_k^\star\}, \hat{\jmath}=j\Big)
\\
\le &\exp\Big(-\frac{2 (G-1)^2 t^2}{\sum_{i\ne j}(1-(-1))^2}\Big) \\
= &\exp\big(-2(G-1)t^2\big).
\end{split}
\end{equation*}
Applying the same bound to \(-(\hat{p}_{-j}-\mu^\star_{-j})\) and taking a union bound yields

\begin{equation*}
\begin{split}
P\Big(\big|\hat{p}_{-j}-\mu^\star_{-j}\big|\ge t\ \Big|\ \{q_k^\star\}, \hat{\jmath}=j\Big)
\\
\le 2\exp\big(-2(G-1)t^2\big).
\end{split}
\end{equation*}
Finally, remove the conditioning on \(\hat{\jmath}\):
\begin{equation*}
\begin{split}
&{P}\Big(\big|\hat{p}_{-\hat{\jmath}}-\mu^\star_{-\hat{\jmath}}\big|\ge t\ \Big|\ \{q_k^\star\}\Big)\\
=&\sum_{j=1}^G {P}(\hat{\jmath}=j\mid \{q_k^\star\}) \cdot\\
&\quad \quad P\Big(\big|\hat{p}_{-j}-\mu^\star_{-j}\big|\ge t\ \Big|\ \{q_k^\star\}, \hat{\jmath}=j\Big)\\
\le &2\exp\big(-2(G-1)t^2\big).
\end{split}
\end{equation*}
\end{proof}
\noindent
\textbf{Theorem~\ref{theorem:1}} (High-probability closeness to target $\rho$.)\textit{
Fix \(\delta\in(0,1)\). Given $|q_i - q_i^\star| \le \epsilon$, $\forall i$, with probability at least \(1-\delta\), we have
\[
\big|\hat{p}_{-\hat{\jmath}}-\rho\big|
\ \le\
\min_{j}|\mu^\star_{-j}-\rho|
\ +\ 2\epsilon
\ +\ \sqrt{\frac{\log(2/\delta)}{2(G-1)}}.
\]}
\begin{proof}
By triangle inequality, we have
\[
|\hat{p}_{-\hat{\jmath}}-\rho|
\le |\hat{p}_{-\hat{\jmath}}-\mu^\star_{-\hat{\jmath}}| + |\mu^\star_{-\hat{\jmath}}-\rho|.
\]
Use Lemma~\ref{lemma:A.3} with \(t=\sqrt{\frac{\log(2/\delta)}{2(G-1)}}\) and Lemma~\ref{lemma:A.2}, we can obtain the theorem.
\end{proof}

\subsection{Details of the Datasets}
\label{appx:datasets}
\paragraph{Dapo-Math-17k.} DAPO-Math-17k~\citep{dapo} is a curated collection of mathematical problems paired with verifiable final answers. The problems are sourced from online math resources and manual annotations, and are transformed to require an integer final answer to facilitate easy parsing. The dataset is commonly used for RLVR-style training and evaluation on math reasoning tasks.

\paragraph{Math500.} 
Math500~\citep{math500} is a subset of the MATH~\citep{math} dataset, containing 500 high-school-level problems. It covers a wide range of topics, including algebra, geometry, and precalculus, and is commonly used for comprehensive evaluation of mathematical reasoning.

\paragraph{Minervamath.} 
Minervamath~\citep{minervamath} consists of 272 problems, sourced primarily from MIT OpenCourseWare courses. It is designed to evaluate the mathematical and quantitative reasoning capabilities of LLMs.

\paragraph{OlympiadBench.} 
OlympiadBench~\citep{olympiadbench} contains 8,476 Olympiad-level math and physics problems, including problems from the Chinese college entrance exam. Each problem is accompanied by expert annotations with step-by-step reasoning.

\paragraph{AMC'23.} AMC'23 is a dataset of 40 problems from the 2023 American Mathematics Competitions (AMC). The final answer is an integer ranging from 0 to 999.

\paragraph{AIME'24/AIME'25.} Each dataset contains 30 challenging problems from the American Invitational Mathematics Examination (AIME). These questions require deep knowledge and techniques, especially in combinatorics and geometry. The final answer is an integer ranging from 0 to 999.

For small datasets such as AMC/AIME, repeated sampling is often used to reduce evaluation variance. 
Results are typically reported as avg@$k$, i.e., the average accuracy over $k$ independent repeats, or pass@$k$, 
i.e., whether at least one of the $k$ samples is correct. Maj@$k$ is also commonly used, defined as the accuracy of the majority-vote answer among $k$ repeats.

\subsection{Additional Related Work}
\paragraph{Efficient Large Language Models.} With the advancement in machine learning~\citep{xu2024slog,zeng2025pave,zeng2025harnessing,zeng2026subspace,wei2025cofirec} and foundation models~\citep{zhou2025scale,zhou2025dogr,zhang2025ta,qiu2022dimes,wei2026diffkgw,wei2026agentic,bei2026mem}, large language models (LLMs) have demonstrated significant potential across various domains, including mathematics~\citep{li2025beyond}, coding~\citep{zou2025latent,zou2025transformer}, question answering~\citep{chen2026influence}, complex reasoning~\citep{wei2026agentic}, recommendation~\citep{yoo2024ensuring,yoo2025embracing,yoo2025generalizable}, and multi-modality~\cite{bao2025latte,zeng2026subspace}. However, as the number of parameters grows, efficiency becomes a primary bottleneck for practical applications and deployment~\citep{wan2023efficient,lin2026efficient,cheninfluence}. To address this, various methods have been proposed to accelerate both training and inference. Efficient training methods primarily include Parameter-Efficient Fine-Tuning techniques such as LoRA and its variants~\citep{hu2022lora,ding2023parameter,wu2025sd,zeng2025hierarchical,qiu2026remix}, low-bit quantization~\citep{dettmers2023qlora}, and memory-efficient strategies like gradient checkpointing and 3D parallelism. Efficient inference methods focus on reducing computational costs through techniques such as post-training quantization~\citep{frantar2022gptq,lin2024duquant,lin2025quantization,yang2025lrq,bartan2025fineamp,zhang2026quantvla}, structural and non-structural pruning~\citep{lin2024mope,xing2025efficientllm,ai2025resmoe}, and knowledge distillation~\citep{gou2021knowledge} from larger teacher models. Furthermore, specialized LLM inference frameworks have been developed to optimize deployment; for instance, vLLM~\citep{kwon2023efficient} introduces PagedAttention to manage KV cache memory, while SGLang~\citep{zheng2024sglang} further optimizes complex LLM programs via the RadixAttention mechanism.

\paragraph{Reasoning in LLMs.} Reasoning has evolved from early symbolic AI and graph-based paradigms~\citep{he2026powergrow,xu2024discrete,qiu2023reconstructing} such as Knowledge Graph Reasoning (KGR)~\citep{liu2024logic}, to modern transformer-based architectures~\citep{vaswani2017attention,zeng2025hierarchical,zeng2025interformer}. While early neural methods utilized GNNs~\citep{zeng2023parrot,zeng2023generative,zeng2024hierarchical,zeng2024graph,yu2025joint,yu2025planetalign,qiu2026graph,qiu2024tucket} to bridge structured data with vector representations, the LLM era has shifted focus toward emergent reasoning through Chain-of-Thought (CoT) prompting~\citep{dou2025enhancing}. Recent advancements further enhance these capabilities via post-training reinforcement learning, employing techniques with RLVR, like  GRPO~\citep{grpo}, Dr.GRPO~\citep{liu2025understanding}, and DAPO~\citep{dapo}. These methods move beyond simple next-token prediction by utilizing verifiable rewards to encourage multi-step exploration and self-correction, enabling models to solve complex logical and mathematical problems more reliably.

\subsection{Details of Calibration Mapping and Survival Probability Design}

\label{appx:estimator}
As stated in Sec.~\ref{sec:quality}, in practice, our quality head outputs an uncalibrated scalar \emph{quality score} $s_i$ for each rollout. We first normalize the score to $s_i\in[0,1]$ using sigmoid function. We therefore require a calibration mapping $q(s')=P(Y=1|s_i')$ to convert scores into posterior estimates. We design an online binned probability calibration.

We employ an estimator with $B$ bins. 
Specifically, we map a score $s'$ to a bin index $b(s')=\min(B-1,\lfloor Bs'\rfloor)$.
We maintain two labeled buffers of historical rollouts (positive/negative outcomes) and compute histogram counts
\begin{equation}
\begin{split}
\label{eq:count}
c_b^{+}=\#\{k:b(s_k^{+})=b\},\\
c_b^{-}=\#\{k:b(s_k^{-})=b\}.
\end{split}
\end{equation}
To avoid the situation where $c_b^{+/-}=0$ in some bins and reduce few-sample noise, 
we further utilize Laplace smoothing $\alpha$, and estimate the class-conditional likelihoods:
$$P(b|Y=1)=(c_b^+ + \alpha)/(\sum_{b'\in [B]} c_{b'}^+ + \alpha B),$$
$$P(b|Y=0)=(c_b^- + \alpha)/(\sum_{b'\in [B]} c_{b'}^- + \alpha B).$$

Given a prior $\pi=P(Y=1)$ (estimated from the buffers),
we compute posterior-mean estimates via Bayes' rule:
\begin{equation*}
\begin{split}
&q(s)=P(Y=1\mid b(s))\\&=\frac{\pi P(b(s)\mid Y=1)}{\pi P(b(s)\mid Y=1)+(1-\pi)P(b(s)\mid Y=0)}.
\end{split}
\end{equation*}

Based on $q_i=P(Y=1|s_i)$, we assign each rollout a survival probability using an affine function of the deviation from $\rho$:
\begin{equation}
p_i=\text{clip}(\kappa+\delta+\lambda(\rho - q_i),p_{\min},p_{\max}),
\label{eq:survival}
\end{equation}
where $\kappa$ is the target keep rate, $\lambda$ controls the strength of balance correction, and $\delta$ is a scalar bias. The design goal is: (i) the expected keep ratio matches a target $\kappa$,
and (ii) the kept rollouts have a controlled positive
ratio close to $\rho$. 
We solve for $\delta$ by binary search such that the expected keep rate matches $\kappa$ under the current buffer distribution:
$\mathbb{E}\,[p_i]=\kappa$.
Finally, clipping to $[p_{\min},p_{\max}]$ prevents overly aggressive pruning and ensures every rollout retains a non-zero chance to survive, preserving exploration diversity.

In practice, we use $\lambda=0.5$, $B=128$.

\subsection{Algorithm Framework}
\label{appx:algorithm}
Here we present an algorithm framework to better illustrate our system containing fronend and backend, as shown Algorithm~\ref{alg:framework}.

\begin{algorithm*}[!ht]
\caption{\arrol\ System}
\label{alg:framework}
\begin{algorithmic}[1]
\Require Dataset $\{(x_i,a_i)\}_i$, batch size $B$, group size $G$
\Statex \textbf{FRONTEND (verl)}
\For{training step $t=1,2,\dots$}
    \State Sample prompts $\{(x_i,a_i)\}_{i=1}^{B}$ and form $B\!\times\!G$ rollout requests $\mathcal{P}$
    \State Compute histogram params $h_t$ (binned estimator in Eq.~\eqref{eq:count}, Eq.~\eqref{eq:survival})
    \State Stream current policy + quality-head weights to backend
    \State Send $(\mathcal{P}, h_t)$ to backend and wait for responses $\mathcal{R}$ (with prune flags)
    \State Filter pruned rollouts $\mathcal{R}\to\mathcal{R}'$, rebatch
    \State Compute rewards, log-probs on $\mathcal{R}'$; main RL loss + quality-head loss
    \State Backprop and optimizer step on policy and quality head
\EndFor

\Statex \textbf{BACKEND (vLLM)}
\While{requests arrive}
    \State Receive $(\text{weights stream}, \mathcal{P}, h)$; load/attach weights
    \State Insert $\mathcal{P}$ into request pool
    \While{request pool not empty}
        \State Adaptively pick a micro-batch $\{r_i\}_{i=1}^{B'}$ from the pool
        \State Do one forward step (prefill/decoding) to get next tokens and quality logits
        \For{each $r_i$ in the micro-batch}
            \If{$r_i$ hits $L_{\text{detect}}$ for the first time}
                \State Compute count $s_i$ and survival prob $p_i$ (Eq.~\eqref{eq:count}, Eq.~\eqref{eq:survival})
                \State Prune $r_i$ w.p. $1-p_i$; record prune flag
            \EndIf
            \State Remove completed/pruned requests from pool; mark prune flags
        \EndFor
    \EndWhile
    \State Return all responses (with prune flags) to frontend
\EndWhile
\end{algorithmic}
\end{algorithm*}

\subsection{Potential Risks}
We only use public data, and we do not expect any personally identifying information.
Our method reduces the cost of RL training, which could lower the barrier to scaling RL-based post-training and be misapplied outside math. We recommend standard safety policies and dataset hygiene when transferring to other domains.

\subsection{Declaration of AI Assistance}
We used AI tools solely for language polishing. These tools did not contribute to the experiments, analysis, results, or scientific claims, and no paragraphs were generated by AI.

\end{document}